\def\eqref#1{equation~\ref{#1}}
\def\1{\bm{1}}
\DeclareMathAlphabet{\mathsfit}{\encodingdefault}{\sfdefault}{m}{sl}
\SetMathAlphabet{\mathsfit}{bold}{\encodingdefault}{\sfdefault}{bx}{n}
\theoremstyle{plain}
\newtheorem{theorem}{Theorem}[section]
\newtheorem{corollary}[theorem]{Corollary}
\newtheorem{lemma}[theorem]{Lemma}
\theoremstyle{definition}
\theoremstyle{remark}
\newtheorem{remark}[theorem]{Remark}
\newcommand{\myT}{\mathbb{T}}
\newcommand{\myV}{\mathbb{V}}
\newcommand{\myP}{\mathbf{P}}
\newcommand{\myD}{\mathbf{D}}
\title{Structure As Search: Unsupervised 
Permutation Learning for Combinatorial Optimization}
\author{%
  Yimeng Min \\
  Department of Computer Science\\
  Cornell University \\
  Ithaca, NY, USA \\
  \texttt{min@cs.cornell.edu} \\
  \And
  Carla P. Gomes\\
  Department of Computer Science\\
  Cornell University \\
  Ithaca, NY, USA \\
  \texttt{gomes@cs.cornell.edu}\\
}
\begin{document}

\maketitle

\begin{abstract}
We propose a non-autoregressive framework for the Travelling Salesman Problem where solutions emerge directly from learned permutations, without requiring explicit search. By applying a similarity transformation to Hamiltonian cycles, the model learns to approximate permutation matrices via continuous relaxations. Our unsupervised approach achieves competitive performance against classical heuristics, demonstrating that the inherent structure of the problem can effectively guide combinatorial optimization without sequential decision-making. Our method offers concrete evidence that neural networks can directly capture and exploit combinatorial structure.
\end{abstract}

\section{Introduction}
How can we solve the Travelling Salesman Problem (TSP), a classic NP-hard problem in combinatorial optimization? Given a set of cities and pairwise distances, the goal is to find the shortest tour that visits each city exactly once and returns to the starting point. A widely held view is that to obtain reasonably good solutions, some form of search is almost always necessary~\citep{applegate2006traveling}. Whether through greedy construction, local improvement, or other heuristics, search remains the standard approach.  Exact solvers such as Concorde can solve moderate-sized instances optimally, but their exponential runtime limits scalability~\citep{concorde}. To handle larger instances, local search techniques such as Lin–Kernighan–Helsgaun (LKH)~\citep{lin1973effective,helsgaun2000effective} have long been used as practical heuristics.

While classical methods remain highly effective, they rely on meticulously designed rules and handcrafted techniques. In contrast, neural network-based approaches aim to learn solution strategies directly from data. One of the earliest neural formulations of TSP was the Hopfield–Tank model~\citep{hopfield1985neural}, which framed the problem as energy minimization in a neural network. Although not data-driven, it marked an early attempt to use neural computation for combinatorial optimization. However, this approach lacked scalability. Recent data-driven methods generally fall into two categories: reinforcement learning (RL), where tours are constructed autoregressively via learned policies~\citep{khalil2017learning,deudon2018learning,kool2018attention}, and supervised learning (SL), which  adopts a two-stage formulation: neural networks generate local preferences, while explicit search procedures build the final solution~\citep{joshi2019efficient}. However, RL methods often suffer from sparse rewards and high training variance, while SL approaches require ground-truth solutions during training, which is computationally expensive due to the NP-hard nature of TSP.

In both RL and SL approaches to the TSP, some form of search is involved, either through learned policies or explicit heuristics. Most RL methods are also autoregressive, generating tours sequentially in a fixed order. Why might we want to move away from this approach? Many combinatorial optimization problems exhibit strong natural structure; in the case of the TSP, this is the shortest Hamiltonian cycle, which inherently constrains the solution space. Recent work~\citep{min2023unsupervised} shows that such structure can be exploited through unsupervised, non-autoregressive (NAR) learning, but their framework still requires local search heuristics to assemble final solutions. Building on this, \citet{min2023unsupervisedw} formulated TSP as permutation learning with the Gumbel-Sinkhorn operator, though again relying on refinement strategies akin to search.  Importantly, \citet{min2023unsupervised} demonstrated that unsupervised learning can already reduce the search space, these developments naturally raise the question of whether it is possible to learn high-quality TSP solutions entirely without supervision, search, or autoregressive decoding.

In summary, nearly all data-driven TSP methods, whether supervised, reinforced, or unsupervised, still rely on search. This reliance reveals a core barrier: achieving a learning-based search-free paradigm for TSP remains a major open challenge, as it would provide direct evidence of neural networks’ inherent ability to solve combinatorial problems.

In this work, we propose a  different perspective on learning combinatorial structures. Rather than treating structure as the output of a post-hoc search, we explore the idea that \emph{structural inductive bias  can replace explicit search}.  Building on the unsupervised learning for TSP (UTSP) framework~\citep{min2023unsupervised,min2023unsupervisedw}, we formulate the TSP as a permutation learning problem: the model directly generates a Hamiltonian cycle using a permutation matrix. Our fully unsupervised, non-autoregressive method requires no optimal training data, search or rollouts. Instead, we train the model using a Gumbel-Sinkhorn relaxation of permutation matrices, followed by hard decoding via the Hungarian algorithm at inference time. This enables solutions to emerge directly from learned structure alone. 

We demonstrate that our model consistently outperforms classical baselines, including the nearest neighbor algorithm and farthest insertion, across a range of instance sizes. The structural inductive bias encoded in our model alone generates high-quality solutions without explicit search procedures. Our findings suggest that in combinatorial optimization, appropriately designed structural constraints can serve as effective computational mechanisms, offering a complementary paradigm to conventional search approaches. This suggests that structure itself may be sufficient to guide optimization in certain combinatorial problems.

\section{Background: Unsupervised Learning for the TSP}

The TSP asks to find the shortest route that visits each city exactly once and returns to the starting point. Given $n$ cities with coordinates $x \in \mathbb{R}^{n \times 2}$, we want to find a permutation $\sigma \in S_n$ that minimizes the total tour length:
\begin{equation}
\min_{\sigma \in S_n} \sum_{i=1}^{n} d(x_{\sigma(i)}, x_{\sigma(i+1)}),
\end{equation}
where $d(\cdot,\cdot)$ is typically the Euclidean distance $d(x_i, x_j) = \|x_i - x_j\|_2$, and we define $\sigma(n+1) := \sigma(1)$ to ensure the tour returns to the starting city.

To enable optimization over Hamiltonian cycles using neural networks, we first introduce a matrix-based representation of permutations. We begin by defining the cyclic shift matrix $\mathbb{V} \in \{0,1\}^{n \times n}$ for $n \geq 3$ as
\begin{equation}
\mathbb{V}_{i,j} = \begin{cases}
1 & \text{if } j \equiv (i+1) \pmod{n} \\
0 & \text{otherwise}
\end{cases},
\end{equation}
for $i,j \in \{0,1,\ldots,n-1\}$. This matrix has the explicit form:
\begin{equation}\label{eq:v}
\mathbb{V} = \begin{pmatrix}
0 & 1 & 0 & 0 & \cdots & 0 & 0 \\
0 & 0 & 1 & 0 & \cdots & 0 & 0 \\
0 & 0 & 0 & 1 & \cdots & 0 & 0 \\
\vdots & \vdots & \vdots & \vdots & \ddots & \vdots & \vdots \\
0 & 0 & 0 & 0 & \cdots & 0 & 1 \\
1 & 0 & 0 & 0 & \cdots & 0 & 0
\end{pmatrix}.
\end{equation}


The matrix $\mathbb{V}$ represents the canonical Hamiltonian cycle $1 \rightarrow 2 \rightarrow 3 \rightarrow \cdots \rightarrow n \rightarrow 1$, where each row has exactly one entry equal to unity, indicating the next city in the sequence. 
More generally, a matrix $\mathcal{H} \in \{0,1\}^{n \times n}$ represents a Hamiltonian cycle if it is a permutation matrix whose corresponding directed graph forms a single cycle of length $n$.

The key insight is that any Hamiltonian cycle can be generated from the canonical cycle $\mathbb{V}$ through a similarity transformation~\citep{min2023unsupervisedw}. Specifically, if $\mathbf{P} \in S_n$ is any permutation matrix, then $\mathbf{P} \mathbb{V} \mathbf{P}^\top$ represents a Hamiltonian cycle obtained by reordering the nodes according to permutation $\mathbf{P}$. 
Given a distance matrix $\mathbf{D} \in \mathbb{R}^{n \times n}$ where $\mathbf{D}_{ij}$ represents the distance between cities $i$ and $j$, the TSP objective becomes finding the optimal permutation matrix that minimizes:
\begin{equation}
\min_{\mathbf{P} \in S_n} \langle \mathbf{D}, \mathbf{P} \mathbb{V} \mathbf{P}^\top \rangle,
\end{equation}
where $\langle \mathbf{A}, \mathbf{B} \rangle = \text{tr}(\mathbf{A}^\top \mathbf{B})$ denotes the matrix inner product. The inner product $\langle \mathbf{D}, \mathbf{P} \mathbb{V} \mathbf{P}^\top \rangle$ is the distance of the Hamiltonian cycle represented by $\mathbf{P} \mathbb{V} \mathbf{P}^\top$.

Since finding the optimal discrete permutation matrix is NP-hard and backpropagating through discrete variables is non-differentiable, we relax the problem by replacing the hard permutation matrix $\mathbf{P}$ with a soft permutation matrix $\mathbb{T} \in \mathbb{R}^{n \times n}$. Following~\citep{min2023unsupervised,min2023unsupervisedw}, we use a Graph Neural Network (GNN) to construct $\mathbb{T}$ and optimize the loss function:
\begin{equation}\label{eq:tsploss}
\mathcal{L}_{\text{TSP}} = \langle \mathbf{D},\mathbb{T} \mathbb{V} \mathbb{T}^\top \rangle.
\end{equation}
The soft permutation matrix $\mathbb{T}$ approximates a hard permutation matrix. Here, the Hamiltonian cycle constraint is implicitly enforced through the structure $\mathbb{T} \mathbb{V} \mathbb{T}^\top$. This enables gradient-based optimization to find good approximate solutions, while naturally incorporating both the shortest path objective and the Hamiltonian cycle constraint. The GNN learns to generate a soft permutation matrix $\mathbb{T}$ that, when used in the transformation $\mathbb{T} \mathbb{V} \mathbb{T}^\top$, yields a soft adjacency matrix representing a Hamiltonian cycle.
This approach provides a non-autoregressive, unsupervised learning (UL) framework without sequential decision-making or ground truth supervision, enabling efficient end-to-end training directly from the combinatorial optimization objective.

\section[From Soft Permutation T to Hard Permutation P]{From Soft Permutation $\mathbb{T}$ to Hard Permutation $\mathbf{P}$}

To obtain a hard permutation matrix $\mathbf{P}$ from the GNN output, we follow the method proposed in \citep{min2025unsupervised}.  We use the Gumbel-Sinkhorn operator~\citep{mena2018learning}, which provides a differentiable approximation to permutation matrices during training. At inference time, we extract a discrete permutation via the Hungarian algorithm. Following the UTSP model~\citep{min2023unsupervised}, the GNN processes geometric node features $f_0 \in \mathbb{R}^{n \times 2}$ (city coordinates) along with an adjacency matrix $A \in \mathbb{R}^{n \times n}$ defined by:
\begin{equation}
A = e^{-\mathbf{D}/s},
\end{equation}
where $\mathbf{D}$ is the Euclidean distance matrix and $s$ is a scaling parameter. The GNN generates logits $\mathcal{F}\in \mathbb{R}^{n\times n}$ that are passed through a scaled hyperbolic tangent activation:
\begin{equation}
\mathcal{F} = \alpha \tanh(f_{\mathrm{GNN}}(f_0, A)),
\end{equation}
where $\alpha$ is a scaling parameter, and $f_{\mathrm{GNN}} : \mathbb{R}^{n \times 2} \times \mathbb{R}^{n \times n} \to \mathbb{R}^{n \times n}$ is a GNN that operates on node features $f_0$ and the adjacency matrix $A$.

The scaled logits are passed through the Gumbel-Sinkhorn operator to produce a differentiable approximation of a permutation matrix:
\begin{equation}
\mathbb{T} = \mathrm{GS}\left(\frac{\mathcal{F} + \gamma \epsilon}{\tau},\, l\right),
\end{equation}
where $\epsilon$ is i.i.d. Gumbel noise, $\gamma$ is the noise magnitude, $\tau$ is the temperature parameter which controls relaxation sharpness, and $l$ is the number of Sinkhorn iterations. Lower values of $\tau$ yield sharper, near-permutation matrices. At inference, we derive a hard permutation using the Hungarian algorithm:
\begin{equation}
\mathbf{P} = \mathrm{Hungarian}\left(-\frac{\mathcal{F} + \gamma \epsilon}{\tau}\right).
\label{eq:hung}
\end{equation}
The final Hamiltonian cycle is reconstructed as $\mathbf{P} \mathbb{V} \mathbf{P}^\top$, yielding a discrete tour that solves the TSP instance.

\section{Training and Inference}
\label{sec:training}
Our training objective minimizes the loss function in Equation~\ref{eq:tsploss}, incorporating a structural inductive bias: the structure $\mathbb{T} \mathbb{V} \mathbb{T}^\top$ implicitly encodes the Hamiltonian cycle constraint, guiding the model toward effective  solutions.

During inference, we decode the hard permutation matrix $\mathbf{P}$ using the Hungarian algorithm as previously described  n Equation~\ref{eq:hung}. The final tour is obtained directly through $\mathbf{P} \mathbb{V} \mathbf{P}^\top$, which always generates a valid Hamiltonian cycle by construction. Unlike conventional TSP heuristics requiring local search, our solutions naturally emerge  from the learned permutation matrices. The key advantage of this framework lies in its structural guarantee: regardless of the quality of the learned permutation matrix $\mathbf{P}$, the transformation $\mathbf{P} \mathbb{V} \mathbf{P}^\top$ will always yield a feasible TSP solution. The optimization process thus focuses entirely on finding the permutation that minimizes tour length, while the constraint is automatically satisfied.\footnote{While we use the Hungarian algorithm to obtain hard permutations from each model's soft output, this step is deterministic and not part of any heuristic or tree-based search procedure. We use the term \emph{search} in the sense of explicit exploration or rollout over solution candidates, as in beam search or reinforcement learning.}

\subsection{Experimental Setup}
Our experiments span three distinct problem sizes: 20-node, 50-node, and 100-node TSP instances. For each problem size, we perform hyperparameter sweeps to identify the optimal configurations. Training data consist of uniformly distributed TSP instances generated for each problem size, with 100,000 training instances for 20-node, 500,000 training instances on 50-node instances, and 1,500,000 training instances for 100-node instances. All problem sizes use 1,000 instances each for validation and test. 

\subsection{Hyperparameter Configuration}
We conduct parameter exploration through grid search to evaluate our approach. The 20-node instances are trained across all combinations of temperature $\tau \in \{2.0, 3.0, 4.0, 5.0\}$ and noise scale $\gamma \in \{0.005, 0.01, 0.05, 0.1, 0.2, 0.3\}$, resulting in 24 configurations for each size; the 50-node instances are trained across all combinations of temperature $\tau = 5.0$ and noise scale $\gamma \in \{0.005, 0.01, 0.05, 0.1, 0.2, 0.3\}$, while the 100-node instances use only one temperature value $\tau = 5.0$ with an expanded noise scale $\gamma \in \{ 0.1, 0.2, 0.3\}$, totaling 3 configurations. We employ $\ell = 60$ Sinkhorn iterations for 20-node instances and $\ell = 80$ for 50- and 100-node instances, with training conducted over 300 epochs for 20-node instances and extended to 600 epochs for 50- and 100-node instances to ensure sufficient convergence. For evaluation, tour distances are computed using hard permutations $\mathbf{P}$ obtained via the Hungarian algorithm as described in Equation~\ref{eq:hung}, in contrast to the soft permutation $\myT$ used during training.

\subsection{Network Architecture and Training Details}

Following the UTSP model~\citep{min2023unsupervised}, we employ Scattering Attention Graph Neural Networks (SAGs) with 128 hidden dimensions and 2 layers for 20-node instances, 256 hidden dimensions and 6 layers for 50-node instances, and 512 hidden dimensions and 8 layers for 100-node instances~\citep{min2022can}. For TSP-20, we use SAGs with 6 scattering channels and 2 low-pass channels; for TSP-50 and TSP-100, we use SAGs with 4 scattering channels and 2 low-pass channels.  We train the networks using the Adam optimizer~\citep{kingma2014adam} with weight decay regularization $\lambda = 1 \times 10^{-4}$. Learning rates are set to $1 \times 10^{-3}$ for 20-node instances and $2 \times 10^{-3}$ for 50- and 100-node instances. To ensure training stability, we implement several regularization techniques: (i) learning rate scheduling with a 15-epoch warmup period, (ii) early stopping with patience of 50 epochs, and (iii) adaptive gradient clipping to maintain stable gradients throughout the optimization process.

\begin{figure}[ht]
    \centering
    \begin{minipage}{0.32\textwidth}
        \centering
        \includegraphics[width=\textwidth]{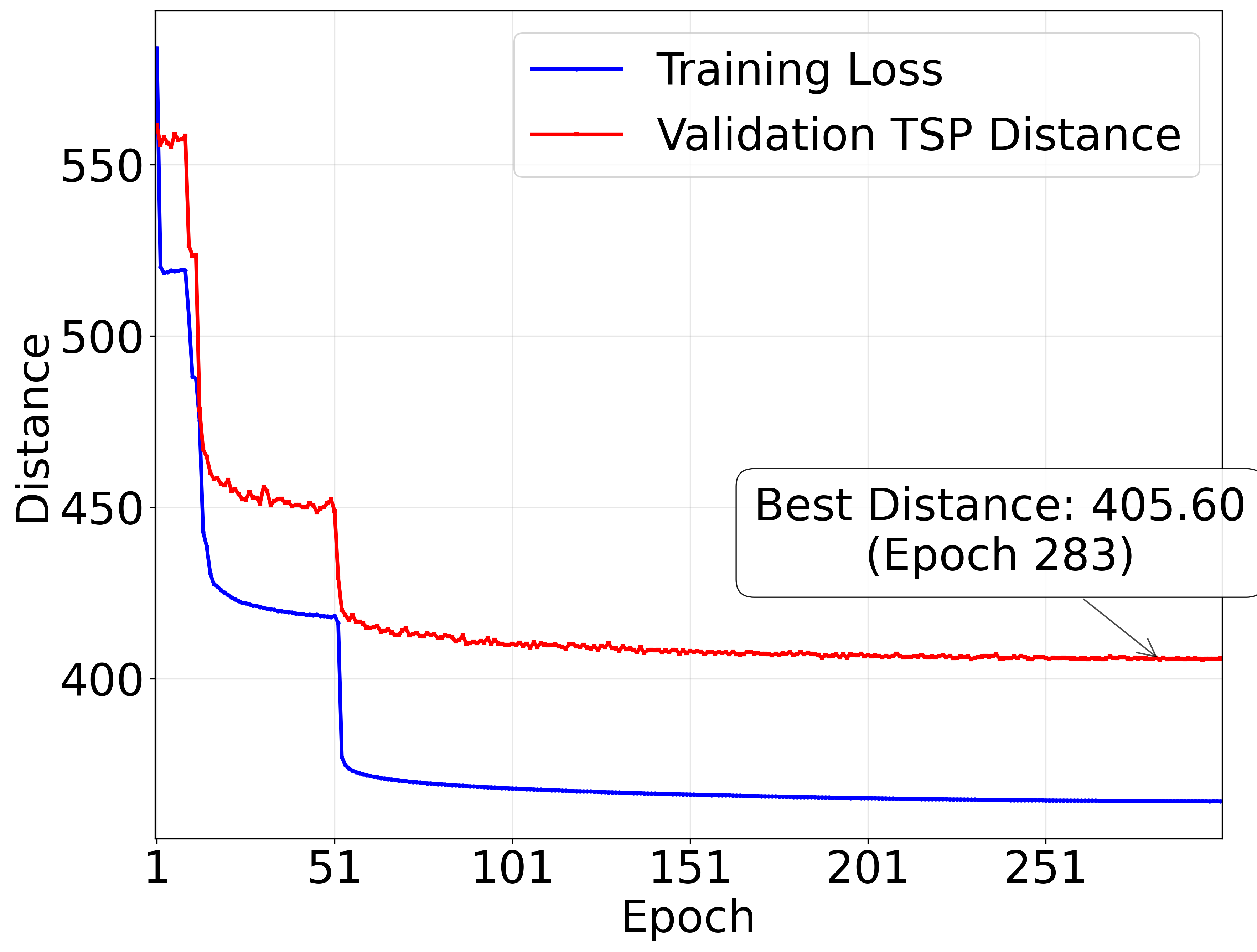}
        \caption*{TSP-20}
        \label{fig:validation_loss_20}
    \end{minipage}
    \hfill
    \begin{minipage}{0.32\textwidth}
        \centering
        \includegraphics[width=\textwidth]{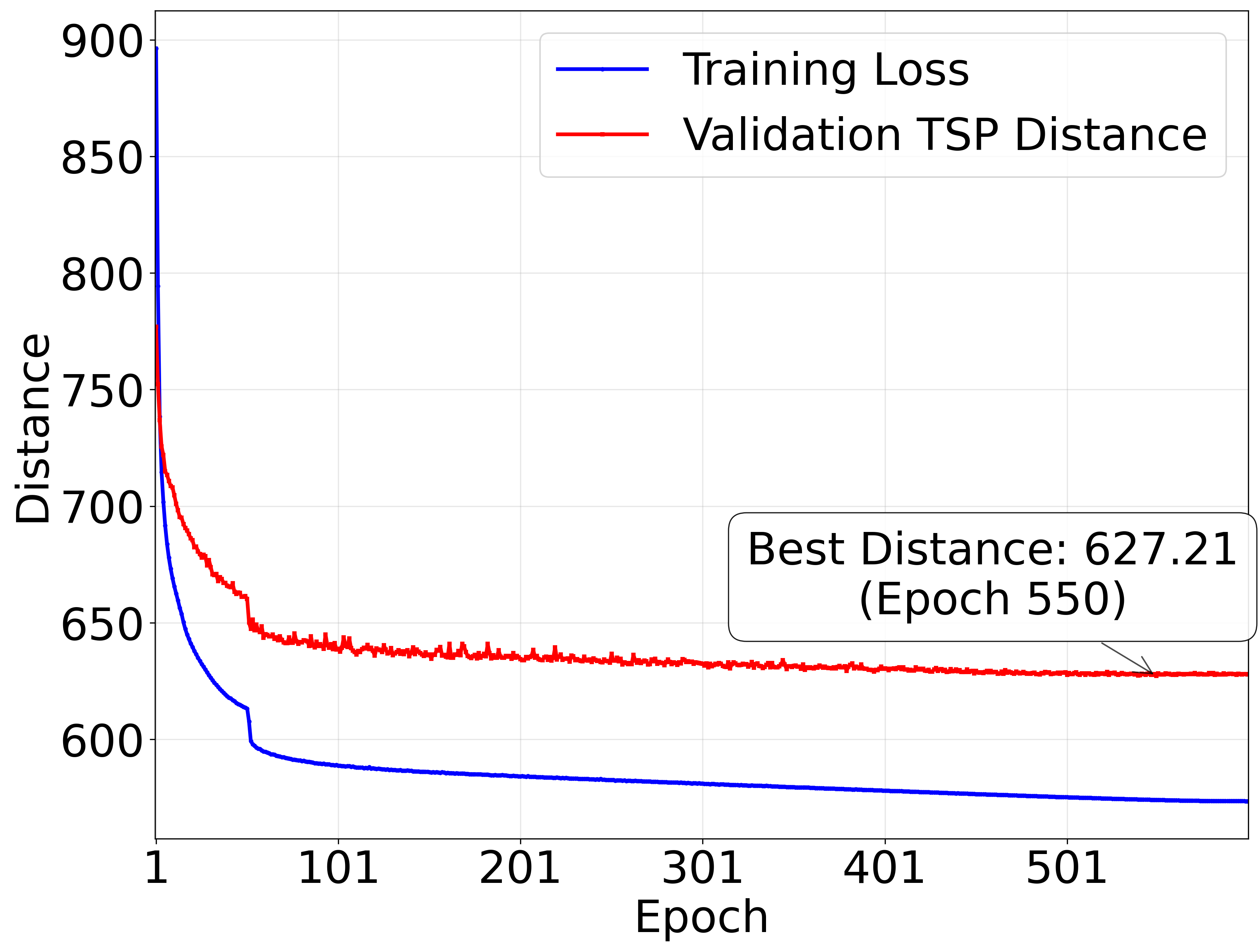}
        \caption*{TSP-50}
        \label{fig:validation_loss_50}
    \end{minipage}
    \hfill
    \begin{minipage}{0.32\textwidth}
        \centering
        \includegraphics[width=\textwidth]{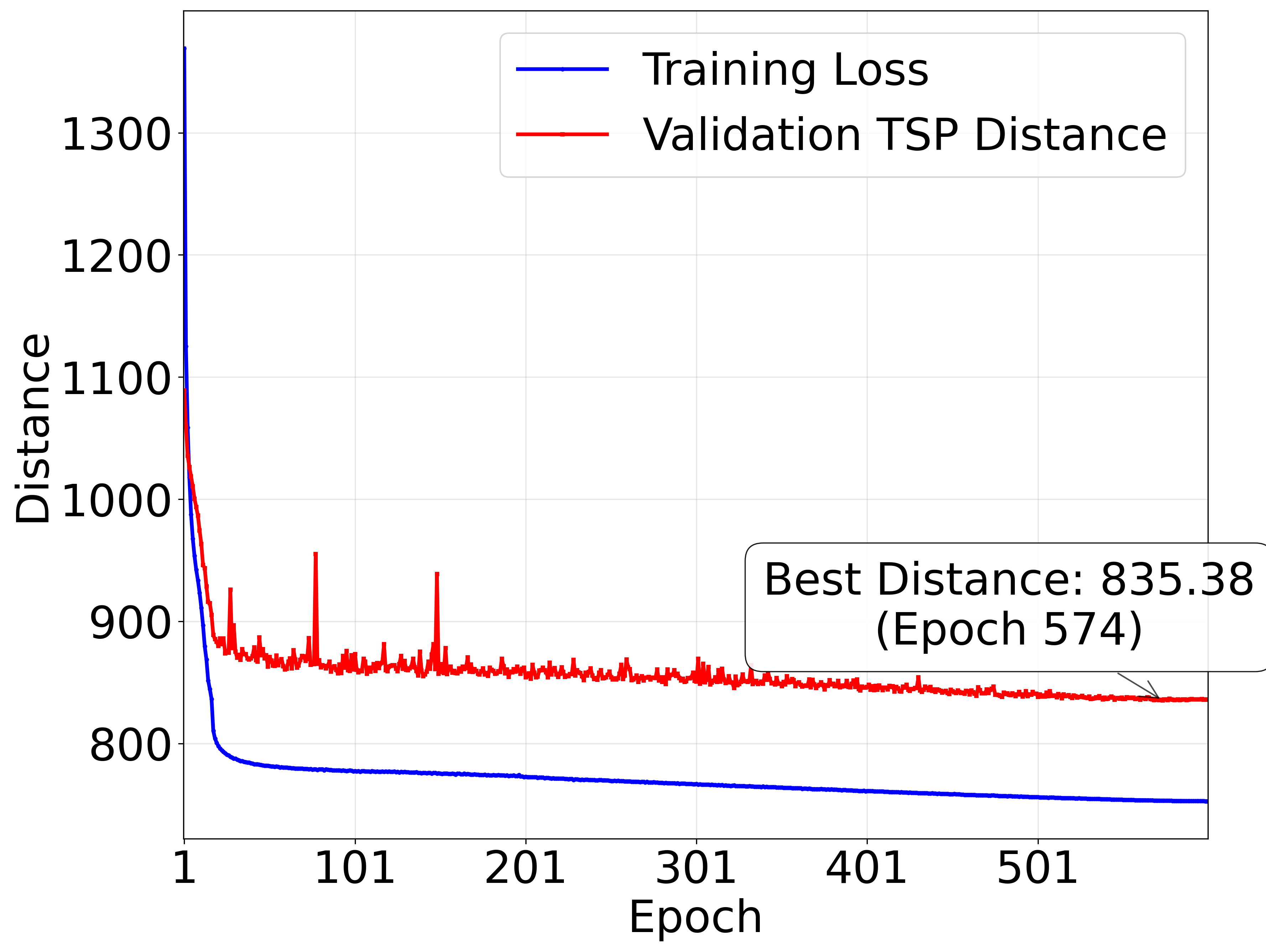}
        \caption*{TSP-100}
        \label{fig:validation_loss_100}
    \end{minipage}
    \caption{
        Training history across TSP sizes. Distances are scaled by a factor of 100.
 }    \label{fig:training_performance}
\end{figure}

For each problem size, we select the best performing model based on validation performance across all hyperparameter combinations of $\tau$ and noise scale $\gamma$. The model configuration that achieves the lowest validation distance is then evaluated on the corresponding test set.

\section{Experiment}
Our training loss with respect to the validation distance is shown in Figure~\ref{fig:training_performance}. The training curves consistently converge  across all problem sizes, with the training loss (blue) steadily decreasing and stabilizing over epochs. Notably, there is a strong correlation between training loss reduction and validation TSP distance improvement (red), indicating effective learning without overfitting. On 20-node instances, the model achieves the best validation distance of 405.60 at epoch 283; on 50-node instances, our model achieves its best distance of 627.21 at epoch 550; on 100-node instances, we achieve the best validation distance of 835.38 at epoch 574. Across all scales, the validation performance closely tracks the training loss trajectory, confirming that the model generalizes well and that minimizing the objective in Equation~\ref{eq:tsploss} consistently leads to improved TSP solution quality.

\subsection{Length Distribution}
\begin{figure}[ht]
    \centering
    \includegraphics[width=1.0\textwidth]{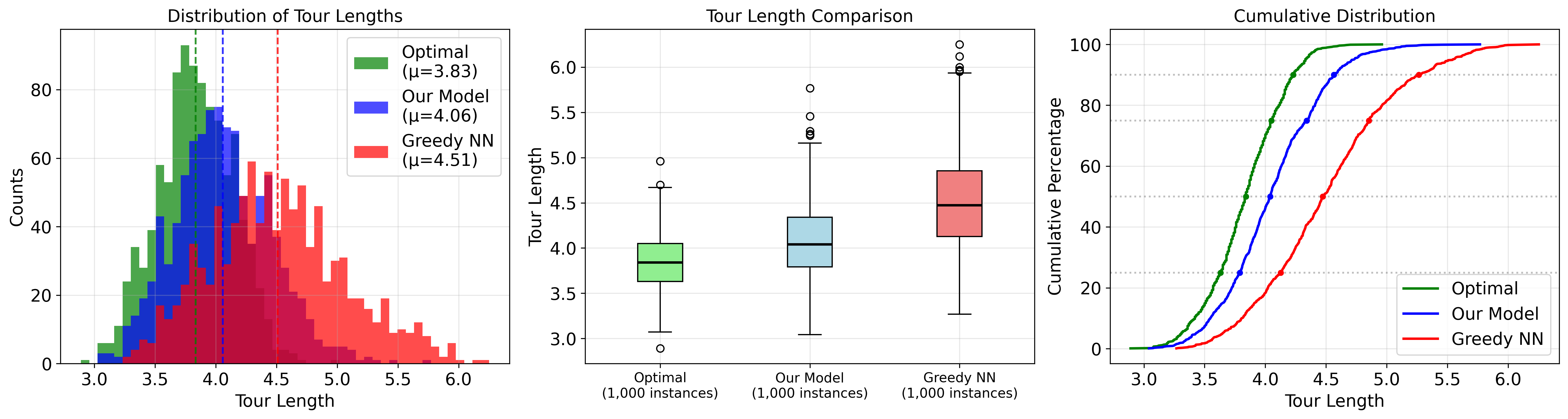}
    \caption{TSP-20 performance comparison showing our model vs.\ greedy nearest neighbor baseline. Our model achieves 0.45 shorter tour lengths (mean: 4.06 vs.\ 4.51) with reduced variability and consistently better performance across all percentiles. Results based on 1,000 test instances.}
\label{fig:TSP20statis}
\end{figure} 
\begin{figure}[htbp]
    \centering
    \includegraphics[width=1.0\textwidth]{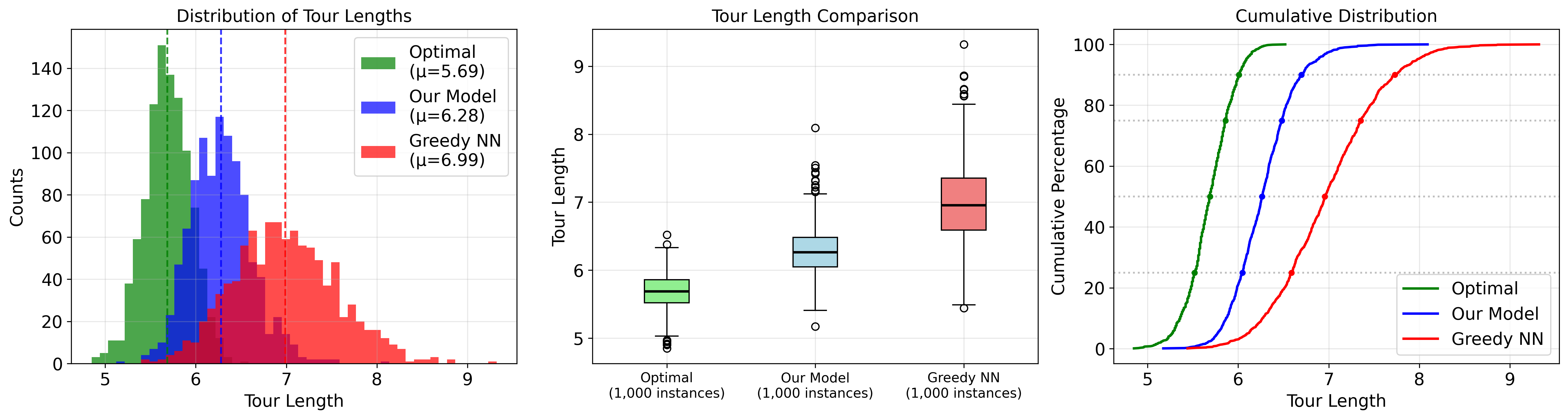}
    \caption{TSP-50 performance comparison showing our model vs.\ greedy nearest neighbor baseline. Our model achieves 0.71 shorter tour lengths (mean: 6.28 vs.\ 6.99) with reduced variability and consistently better performance across all percentiles. Results based on 1,000 test instances.}
\label{fig:TSP50statis}
\end{figure} 

Figures~\ref{fig:TSP20statis}, \ref{fig:TSP50statis}, and \ref{fig:TSP100statis} show the tour length distributions on the test set for 20-, 50-, and 100-node instances, using the model with the lowest validation length across all hyperparameters. Our model consistently outperforms the Greedy Nearest Neighbor (NN) baseline---which constructs tours by iteratively selecting the nearest unvisited node---achieving substantial gains across all problem sizes. The distribution histograms (left panels) reveal that our model produces more concentratedly distributed, shorter tour lengths with mean values of $\mu = 4.06$, $6.28$, and $8.37$ compared to Greedy NN's $\mu = 4.51$, $6.99$, and $9.67$. The box plots (center panels) demonstrate reduced variance and lower median values for our approach, while the cumulative distribution functions (right panels) show our model 
achieves better solution quality, with curves consistently shifted toward shorter tour lengths.
\begin{figure}[htbp]
    \centering
    \includegraphics[width=1.0\textwidth]{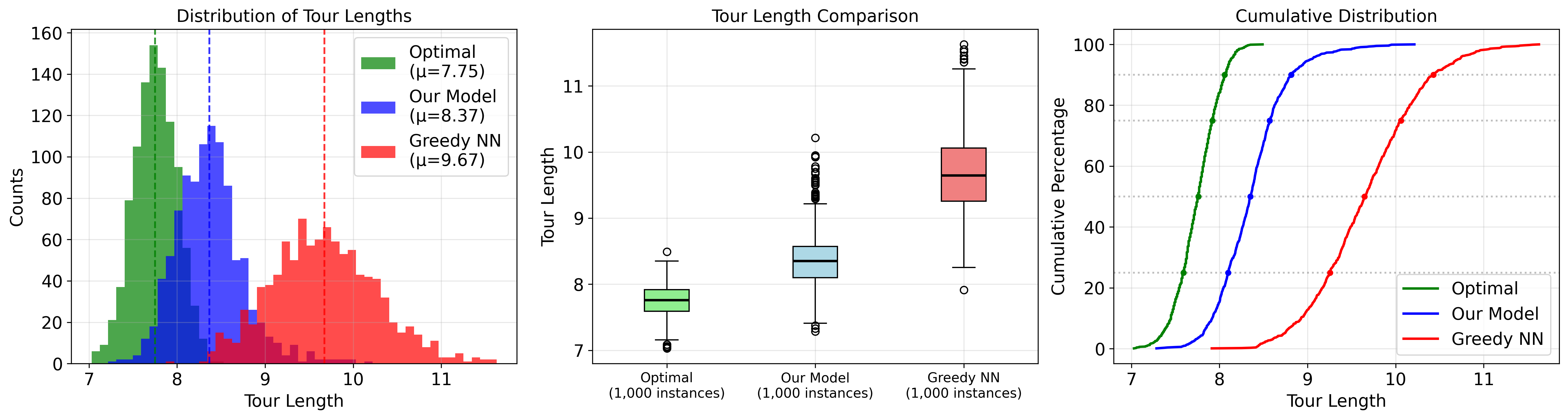}
    \caption{TSP-100 performance comparison showing our model vs.\ greedy nearest neighbor baseline. Our model achieves 1.30 shorter tour lengths (mean: 8.37 vs.\ 9.67) with reduced variability and consistently better performance across all percentiles. Results based on 1,000 test instances.}
\label{fig:TSP100statis}
\end{figure} 

\subsection{Inference Time}
We evaluate the inference efficiency of our approach by measuring the average time per instance, including the GNN forward pass and construction of the hard permutation as defined in Equation~\ref{eq:hung}. On an NVIDIA H100 GPU with batch size 256, the model achieves inference times of 0.17 ms for TSP-20, 0.15 ms for TSP-50, and 0.40 ms for TSP-100 .

\subsection{Optimality Gap}
\paragraph{Optimality Gap Calculation} The optimality gap is computed as:
\begin{equation}
\text{Gap (\%)} = \left( \frac{\text{Tour Length (Method)} - \text{Tour Length (Optimal)}}{\text{Tour Length (Optimal)}} \right) \times 100,
\end{equation}
This measures how far a method's tour length deviates from the optimal solution, with smaller gaps indicating better performance.
\begin{table}[H]
\centering
\caption{
Comparison of tour quality across different heuristics on TSP instances of varying sizes.
}

\begin{tabular}{l@{\hskip 1pt}|c@{\hskip 1pt}|cc@{\hskip 1pt}|cc@{\hskip 1pt}|cc}
\toprule
\multirow{2}{*}{Method} & \multirow{2}{*}{Type} & \multicolumn{2}{c|}{TSP-20} & \multicolumn{2}{c|}{TSP-50} & \multicolumn{2}{c}{TSP-100} \\
 & & Tour Len. & Gap & Tour Len. & Gap & Tour Len. & Gap \\
Concorde & Solver& 3.83&0.00\%&5.69&0.00\%&7.75&0.00\%\\ 
\midrule
Beam search (w=1280) &Search&4.06&6.00\%&6.83&20.0\%&9.89&27.6\%\\
Greedy NN &G&4.51&17.8\%&6.99&22.8\%&9.67 &24.8\%\\
Our method &UL, NAR&4.06&6.00\%&6.28&10.4\%&8.37&8.00\% \\
\bottomrule
\end{tabular}
\label{tab:tsp_results_subset}
\end{table}

Our unsupervised, search-free approach demonstrates competitive performance across TSP instances of varying sizes, achieving optimality gaps of 6.00\%, 10.4\%, and 8.00\% on TSP-20, TSP-50, and TSP-100 respectively (Table~\ref{tab:tsp_results_subset}). Notably, our method matches beam search performance on TSP-20 while significantly outperforming it on larger instances (10.4\% vs 20.0\% gap on TSP-50, and 8.00\% vs 27.6\% gap on TSP-100). Our approach also consistently outperforms the Greedy NN baseline across all problem sizes, with the performance advantage becoming more pronounced on larger instances.

These results suggest that our model effectively captures global tour structure and long-range city dependencies, enabling better solutions compared to methods that rely primarily on local, greedy decisions or limited search strategies.  The consistent performance indicates that structural inductive biases alone can enable the model to discover competitive combinatorial solutions without supervision or explicit search.

\section{Hamiltonian Cycle Ensemble}
Examining the results in Figures~\ref{fig:TSP20statis},~\ref{fig:TSP50statis}, and~\ref{fig:TSP100statis}, we observe a long tail distribution where our model  yields notably suboptimal solutions on some instances. This suggests that while the model generally performs well, it sometimes generates significantly suboptimal solutions.

To address this limitation, we revisit our training objective in Equation~\ref{eq:tsploss}: $\mathcal{L}_{\text{TSP}} = \langle \mathbf{D},\mathbb{T} \mathbb{V} \mathbb{T}^\top \rangle$, where our model learns permutations over the canonical Hamiltonian cycle $\mathbb{V}$. We now propose an ensemble approach utilizing powers of the cyclic shift matrix $\mathbb{V}^k$, where different values of $k$ satisfying $\gcd(k,n) = 1$ generate distinct valid Hamiltonian cycles. Our ensemble strategy trains separate models for each $\mathbb{V}^k$ and selects the minimum tour length solution across all $\varphi(n)$ cycle variants for each test instance, where $\varphi(n)$ is \emph{Euler's totient function}. This leverages diverse Hamiltonian cycle topologies to mitigate long tail behavior, so that when one structure fails, alternatives often succeed, thereby eliminating catastrophic failures.

\subsection{Main Theorem}
\begin{theorem}[$\mathbb{V}^k$ Hamiltonian Cycle Characterization]
\label{thm:vk_hamiltonian}
Let $\mathbb{V}$ be the $n \times n$ cyclic shift matrix and $k \in \mathbb{Z}^+$. Then $\mathbb{V}^k$ represents a Hamiltonian cycle if and only if $\gcd(k,n) = 1$.
\end{theorem}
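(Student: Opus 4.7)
The plan is to identify $\mathbb{V}^k$ explicitly as a permutation matrix on $\mathbb{Z}/n\mathbb{Z}$ and then reduce the Hamiltonian-cycle question to a standard fact about orbits of translation in a cyclic group.

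First, I would establish by induction on $k$ that
\begin{equation}
(\mathbb{V}^k)_{i,j} = \begin{cases} 1 & \text{if } j \equiv i+k \pmod{n}, \\ 0 & \text{otherwise,}\end{cases}
\end{equation}
for $i,j \in \{0,1,\ldots,n-1\}$. The base case $k=1$ is the definition of $\mathbb{V}$. For the inductive step, $(\mathbb{V}^{k+1})_{i,j} = \sum_{m} (\mathbb{V}^k)_{i,m} \mathbb{V}_{m,j}$; the only nonzero term occurs when $m \equiv i+k \pmod n$ and $j \equiv m+1 \pmod n$, giving $j \equiv i+k+1 \pmod n$. This shows that $\mathbb{V}^k$ is the permutation matrix of the shift $\sigma_k : i \mapsto i+k \pmod n$, hence in particular is a permutation matrix (a necessary condition for representing a Hamiltonian cycle).

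Next I would translate the ``single cycle of length $n$'' condition into a statement about $\sigma_k$. By definition, the directed graph of the permutation matrix $\mathbb{V}^k$ decomposes into vertex-disjoint cycles, one for each orbit of $\sigma_k$ acting on $\{0,1,\ldots,n-1\}$. Therefore $\mathbb{V}^k$ represents a Hamiltonian cycle iff $\sigma_k$ has a single orbit of size $n$, i.e., iff $\sigma_k$ acts transitively on $\mathbb{Z}/n\mathbb{Z}$.

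The final step is a standard cyclic-group computation: the orbit of $0$ under $\sigma_k$ is the subgroup $\langle k \rangle \subseteq \mathbb{Z}/n\mathbb{Z}$, which has order $n/\gcd(k,n)$. All orbits have this same size since $\sigma_k$ acts by translation. Thus the orbit equals all of $\mathbb{Z}/n\mathbb{Z}$ iff $\gcd(k,n) = 1$, proving both directions. There is no real obstacle here; the only mildly fiddly step is the induction establishing the closed form for $\mathbb{V}^k$, and even that is routine once one is comfortable identifying $\mathbb{V}$ with the shift operator on $\mathbb{Z}/n\mathbb{Z}$.
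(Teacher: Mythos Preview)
Your proposal is correct and follows essentially the same approach as the paper: identify $\mathbb{V}^k$ with the shift $i \mapsto i+k \pmod n$, observe that a Hamiltonian cycle corresponds to a single orbit, and compute the orbit size as $n/\gcd(k,n)$. The only cosmetic differences are that you add an explicit induction for the closed form of $\mathbb{V}^k$ (which the paper asserts directly) and you phrase the orbit-size computation via the subgroup $\langle k\rangle\subseteq\mathbb{Z}/n\mathbb{Z}$, whereas the paper writes out the factorization $k=dk'$, $n=dn'$ explicitly.
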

\begin{proof}
We prove both directions of the equivalence.

\textbf{Necessity ($\Rightarrow$):} 
Suppose $\mathbb{V}^k$ represents a Hamiltonian cycle. Let $d = \gcd(k,n)$. The matrix $\mathbb{V}^k$ corresponds to the mapping $\sigma^k: i \mapsto (i+k) \bmod n$ on the vertex set $\{0,1,\ldots,n-1\}$. 

Consider the orbit of vertex $0$ under this mapping:
\begin{equation}
\mathcal{O}_0 = \{0, k \bmod n, 2k \bmod n, \ldots, (m-1)k \bmod n\},
\end{equation}
where $m$ is the smallest positive integer such that $mk \equiv 0 \pmod{n}$.

Since $d = \gcd(k,n)$, we can write $k = dk'$ and $n = dn'$ where $\gcd(k',n') = 1$. Then:
\begin{align}
mk \equiv 0 \pmod{n} &\iff dn' \mid mdk' \\
&\iff n' \mid mk' \\
&\iff n' \mid m \quad \text{(since $\gcd(k',n') = 1$)}.
\end{align}

Therefore, the smallest such $m$ is $m = n' = \frac{n}{d}$, so $|\mathcal{O}_0| = \frac{n}{d}$.

If $\mathbb{V}^k$ represents a Hamiltonian cycle, then all $n$ vertices must lie in a single orbit, which requires $|\mathcal{O}_0| = n$. This implies $\frac{n}{d} = n$, hence $d = 1$, i.e., $\gcd(k,n) = 1$.

\textbf{Sufficiency ($\Leftarrow$):}
Suppose $\gcd(k,n) = 1$. Then by the argument above, the orbit of vertex $0$ has size $\frac{n}{1} = n$. This means the sequence $\{0, k, 2k, \ldots, (n-1)k\}$ modulo $n$ contains all distinct elements of $\{0,1,\ldots,n-1\}$.

Therefore, $\mathbb{V}^k$ represents a permutation that cycles through all $n$ vertices exactly once, forming a single Hamiltonian cycle.
\end{proof}

\begin{corollary}[Euler's Totient Function Connection]
\label{cor:totient_connection}
The number of distinct Hamiltonian cycle matrices of the form $\mathbb{V}^k$ is exactly $\varphi(n)$, where $\varphi$ is Euler's totient function.
\end{corollary}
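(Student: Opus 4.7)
My plan is to reduce the counting problem to the definition of Euler's totient function by establishing two facts: first, that the matrices $\mathbb{V}^k$ are periodic in $k$ with period exactly $n$, and second, that within one period the Hamiltonian ones are precisely those indexed by $k$ coprime to $n$ (by Theorem~\ref{thm:vk_hamiltonian}).

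First I would observe that $\mathbb{V}$ has multiplicative order $n$, that is, $\mathbb{V}^n = I$ and no smaller positive power equals $I$. This follows because $\mathbb{V}$ implements the map $i \mapsto (i+1) \bmod n$, so $\mathbb{V}^k$ implements $i \mapsto (i+k) \bmod n$; this equals the identity iff $k \equiv 0 \pmod{n}$. As an immediate consequence, $\mathbb{V}^{k_1} = \mathbb{V}^{k_2}$ iff $k_1 \equiv k_2 \pmod n$, so the set $\{\mathbb{V}^k : k \in \mathbb{Z}^+\}$ is in bijection with the residues $\{0,1,\ldots,n-1\}$.

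Next I would invoke Theorem~\ref{thm:vk_hamiltonian} to restrict attention to residues $k$ with $\gcd(k,n)=1$. Note that the excluded case $k \equiv 0$ corresponds to the identity, which is trivially not a Hamiltonian cycle and satisfies $\gcd(0,n) = n \neq 1$ for $n \geq 2$, so it is consistent to state the result over residues in $\{1,\ldots,n\}$ (or equivalently $\{0,\ldots,n-1\}$). Counting the residues in $\{1,2,\ldots,n\}$ that are coprime to $n$ is, by definition, exactly $\varphi(n)$, completing the count.

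The only real step that requires any care is the injectivity claim $\mathbb{V}^{k_1} \neq \mathbb{V}^{k_2}$ for distinct residues $k_1,k_2 \in \{1,\ldots,n-1\}$ with $\gcd(k_i,n)=1$; everything else is essentially unwinding definitions. I do not expect this to be a genuine obstacle since it follows directly from the order-$n$ property of the cyclic shift, but it is the one place where one might be tempted to skip a line and leave the argument incomplete.
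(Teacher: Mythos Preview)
Your proof is correct and follows essentially the same approach as the paper: invoke Theorem~\ref{thm:vk_hamiltonian} and count residues coprime to $n$. You are in fact more careful than the paper, which omits your distinctness argument (that $\mathbb{V}^{k_1} \neq \mathbb{V}^{k_2}$ for distinct residues $k_1,k_2 \in \{1,\ldots,n\}$) and simply counts the coprime integers in $\{1,\ldots,n\}$ directly.
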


\begin{proof}
By Theorem \ref{thm:vk_hamiltonian}, $\mathbb{V}^k$ represents a Hamiltonian cycle if and only if $\gcd(k,n) = 1$. The number of integers $k \in \{1,2,\ldots,n\}$ such that $\gcd(k,n) = 1$ is precisely $\varphi(n)$.
\end{proof}

\begin{remark}[Directed vs Undirected Cycles]
Note that different values of $k$ may yield distinct \emph{directed} Hamiltonian cycles that correspond to the same \emph{undirected} cycle traversed in opposite directions. For instance, when $n$ is even, $\mathbb{V}^1$ and $\mathbb{V}^{n-1}$ represent the same undirected cycle with opposite orientations. However, each $\mathbb{V}^k$ with $\gcd(k,n) = 1$ defines a unique directed cycle, which is the relevant structure for our ensemble method.
\end{remark}
\begin{figure}[htbp]
    \centering
    \begin{minipage}{0.33\textwidth}
        \centering
        \includegraphics[width=\textwidth]{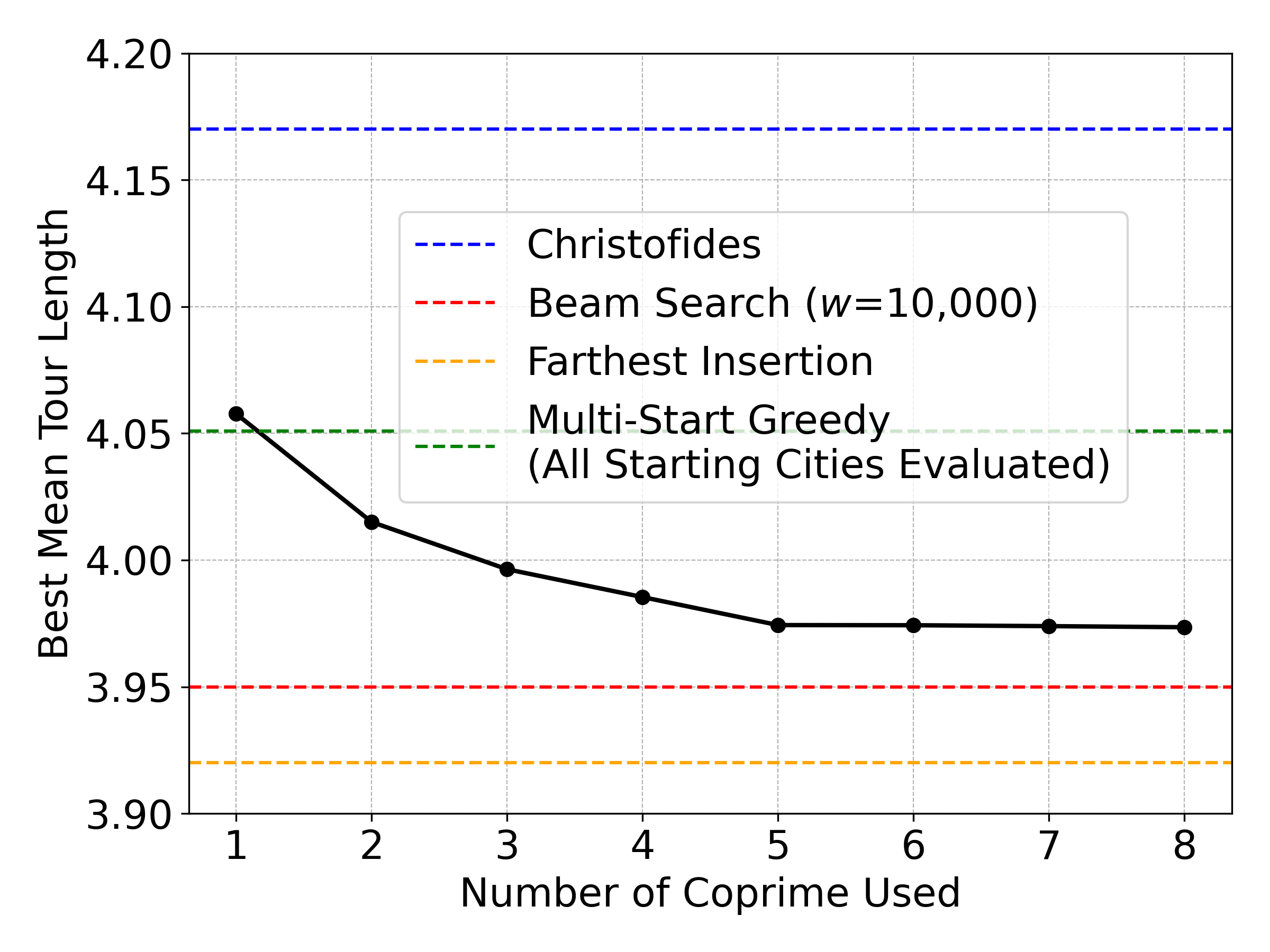}
    \end{minipage}
    \hfill
    \begin{minipage}{0.33\textwidth}
        \centering
        \includegraphics[width=\textwidth]{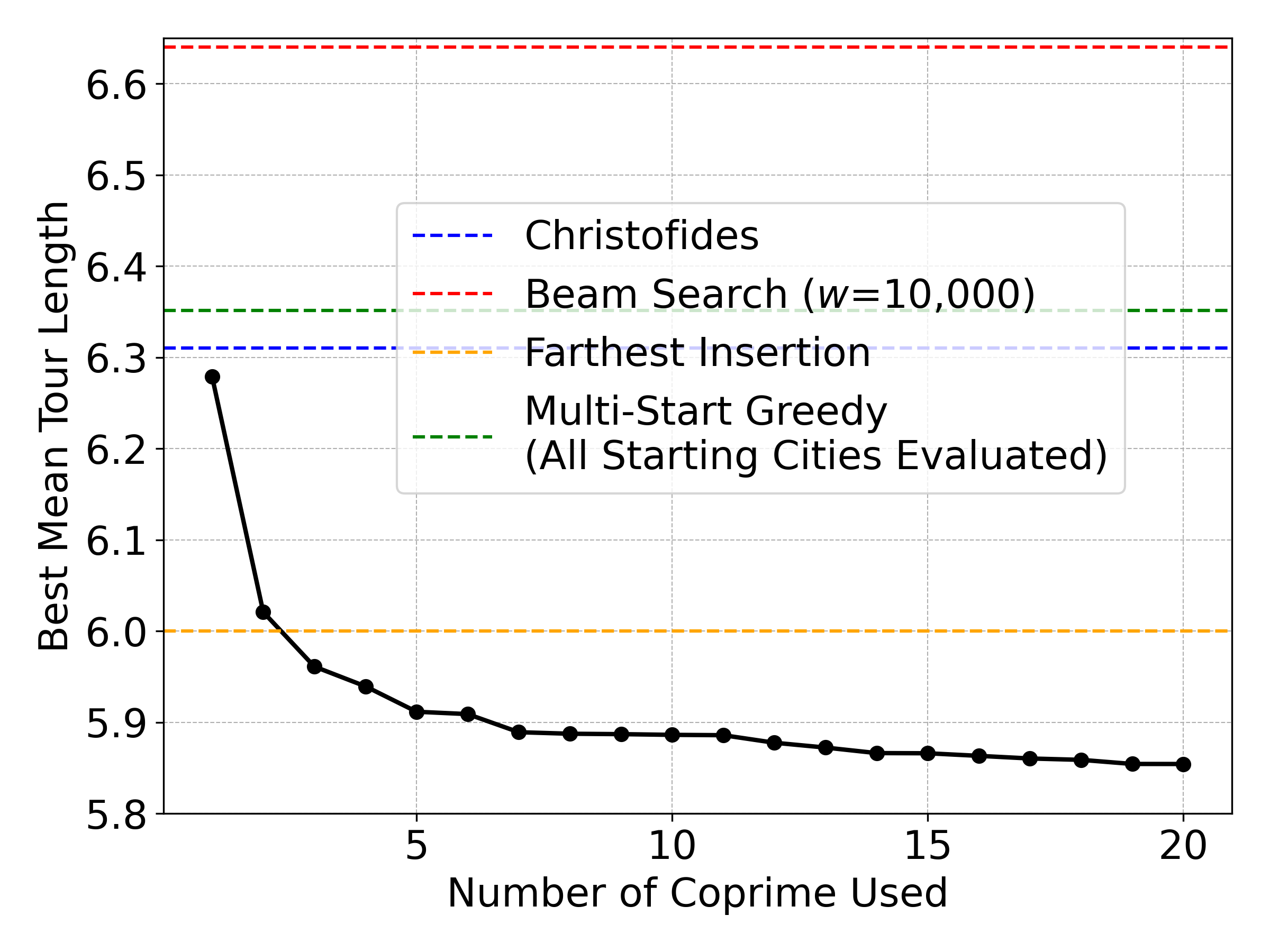}
    \end{minipage}
    \hfill
    \begin{minipage}{0.32\textwidth}
        \centering
        \includegraphics[width=\textwidth]{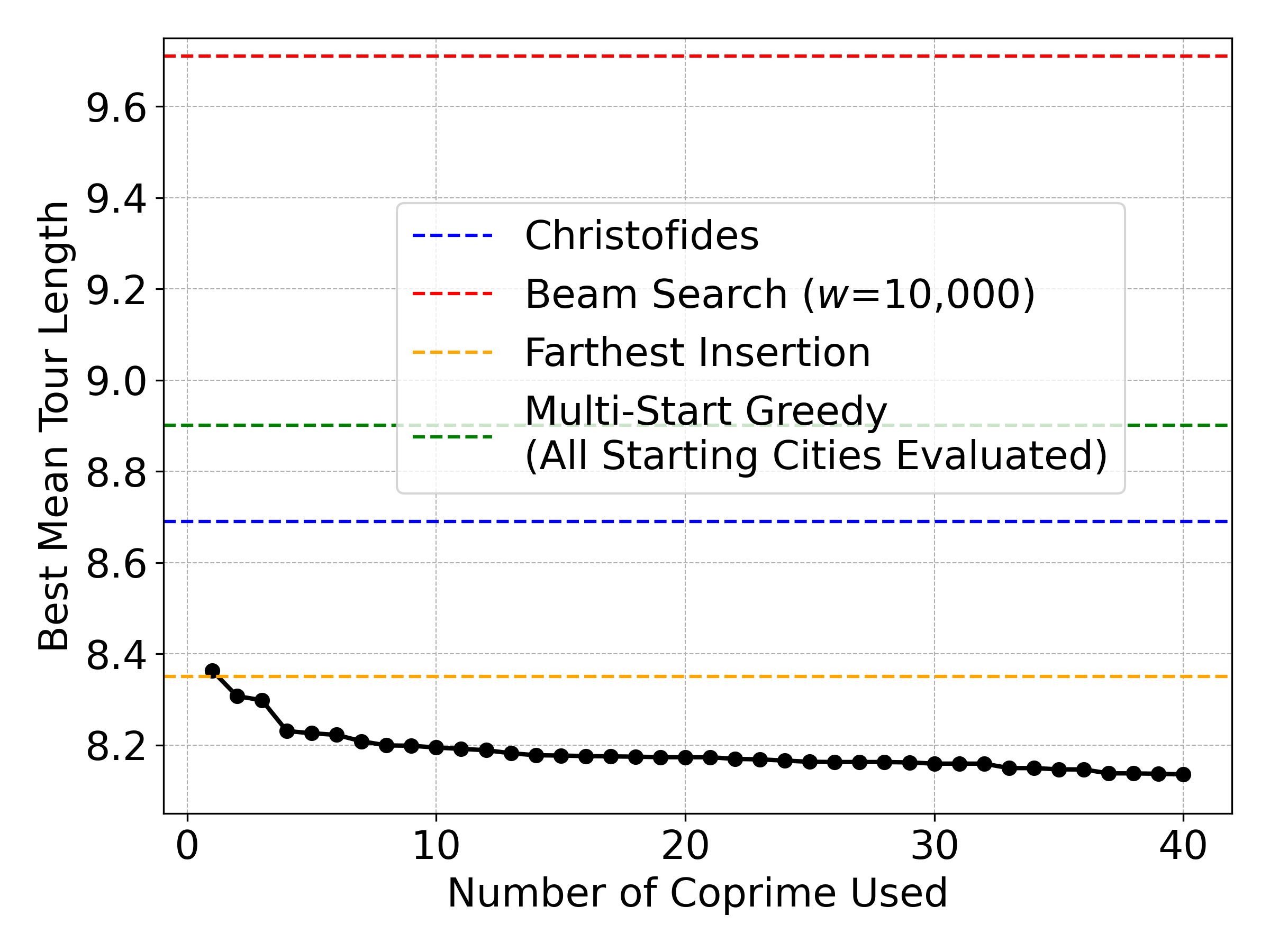}
    \end{minipage}
    \caption{Best mean tour length on TSP instances of different sizes  as the number of coprime shifts increases. Using more shift combinations significantly reduces tour length, outperforming the Greedy Multi-Start baseline. Both Christofides and Beam Search bounds are shown for comparison. From left to right: TSP-20, TSP-50, TSP-100.}
    \label{fig:multi_shift}
\end{figure}

\subsection{Ensemble Training and Inference}
Our ensemble strategy trains separate models for each valid $\mathbb{V}^k$ matrix. Specifically, we construct $\varphi(n)$ models, each optimizing the modified objective:
\begin{equation}
\mathcal{L}_{\text{TSP}}{(k)} = \langle \mathbf{D}, \mathbb{T}_{(k)} \mathbb{V}^k \mathbb{T}_{(k)}^\top \rangle,
\end{equation}
where $\gcd(k,n) = 1$ and $\mathbb{T}_{(k)}$ represents the learned soft permutation matrix corresponding to $\mathbb{V}^k$.

For each $k$, we employ identical training procedures, differing only in the underlying Hamiltonian cycles $\mathbb{V}^k$. We select the model configuration which achieves the lowest validation loss across all hyperparameter combinations for each $k$-specific model. This ensures that each cycle $\mathbb{V}^k$ is properly exploited.

At inference time, we evaluate all $\varphi(n)$ trained models on each test instance. For each model corresponding to Hamiltonian cycles $\mathbb{V}^k$, we decode the hard permutation matrix $\mathbf{P}_{(k)}$ from the learned soft permutation $\mathbb{T}_{(k)}$ using the Hungarian algorithm as previously described in Equation~\ref{eq:hung}. The candidate tour for each ensemble member is obtained directly through $\mathbf{P}_{(k)} \mathbb{V}^k \mathbf{P}_{(k)}^\top$, which always generates a valid Hamiltonian cycle. 

For each individual test instance, we then select the solution with minimum tour length across all ensemble members:
\begin{equation}
\text{Tour}_{\text{final}} = \mathbf{P}_{(k^*)} \mathbb{V}^{k^*} \mathbf{P}_{(k^*)}^\top,
\end{equation}
where $k^* = \arg\min_{k: \gcd(k,n)=1} \langle \mathbf{D},\mathbf{P}_{(k)} \mathbb{V}^k \mathbf{P}_{(k)}^\top \rangle $ is determined instance-specifically. 

This instance-wise selection ensures that each test problem is solved using the most suitable cycle structure from the ensemble, adapting to the particular geometric characteristics of that instance.
\begin{table}[htbp]
\centering
\caption{
Detailed performance comparison of learning-based TSP solvers across different instance sizes (TSP-20/50/100). Metrics include average tour length and optimality gap (\%). Results for baseline methods are taken from~\citep{joshi2019efficient}. While all methods use uniformly generated TSP instances, test sets vary slightly across works. Note that many more recent models exist, we select a subset for comparison.
}
\label{tab:detailed_comparison}
\setlength{\tabcolsep}{1pt}  
\begin{tabular}{l@{\hskip 1pt}|c@{\hskip 1pt}|cc@{\hskip 1pt}|cc@{\hskip 1pt}|cc}
\toprule
\multirow{2}{*}{Method} & \multirow{2}{*}{Type} & \multicolumn{2}{c|}{TSP-20} & \multicolumn{2}{c|}{TSP-50} & \multicolumn{2}{c}{TSP-100} \\
 & & Tour Len. & Gap & Tour Len. & Gap & Tour Len. & Gap \\
\midrule
PtrNet~\citep{vinyals2015pointer} & SL, G & 3.88 & 1.15\% & 7.66 & 34.48\% & - & - \\
PtrNet~\citep{bello2016neural} & RL, G & 3.89 & 1.42\% & 5.95 & 4.46\% & 8.30 & 6.90\% \\
S2V~\citep{khalil2017learning} & RL, G & 3.89 & 1.42\% & 5.99 & 5.16\% & 8.31 & 7.03\% \\
GAT~\citep{deudon2018learning} & RL, G, 2-OPT & 3.85 & 0.42\% & 5.85 & 2.77\% & 8.17 & 5.21\% \\
GAT~\citep{kool2018attention} & RL, G & 3.85 & 0.34\% & 5.80 & 1.76\% & 8.12 & 4.53\% \\
GCN~\citep{joshi2019efficient} & SL, G & 3.86 & 0.60\% & 5.87 & 3.10\% & 8.41 & 8.38\% \\ 
POMO~\citep{kwon2020pomo} & RL,G & 3.83 &0.12\%&5.73&0.64\%&7.84 & 1.07\%\\ \midrule
Concorde & Solver& 3.83&0.00\%&5.69&0.00\%&7.75&0.00\%\\ \midrule
Greedy NN (all start cities)  & G & 4.05  & 5.74\% &  6.35 & 11.6\% & 8.90 &  14.8\%\\
Beam search (w=5,000) & Search &3.98 &  3.92\% &6.71 & 17.9\% &9.77&  26.1\% \\
Beam search (w=10,000) & Search &3.95 &  3.13\% &6.64 & 17.0\% &9.71&  25.3\% \\
Farthest insertion   & Heuristics  &  3.92 &  2.35\% & 6.00 &  5.45\%&  8.35 &  7.74\% \\
Christofides   & Heuristics  &  4.17 &  8.88\% & 6.31 &  10.9\%&  8.69 &  12.1\% \\
\textbf{Hamiltonian cycle ensemble} & UL, NAR  &  3.97 &  3.52\% & 5.85 &   2.81\%&  8.14 &  5.03\% \\
\bottomrule
\end{tabular}
\vspace{-0.5cm}
\end{table}

Figure~\ref{fig:multi_shift} demonstrates the effectiveness of this ensemble approach across 20, 50, and 100-node problems respectively. As the number of coprime shifts increases, the mean tour length decreases substantially, with dramatic improvements observed initially that gradually plateau. For TSP-20, using all $\varphi(20) = 8$ coprime shifts reduces mean tour length from 4.06 to 3.97, surpassing both multi-start greedy and  beam search performance. Similar trends are observed for TSP-50 and TSP-100, where the ensemble approach achieves mean tour lengths of 5.85 and 8.14 respectively when using all available coprime shifts. This demonstrates that leveraging multiple Hamiltonian cycle structures effectively mitigates the long tail problem while consistently improving solution quality. Here, each permutation learner in our framework is trained with respect to a fixed initial Hamiltonian cycle $\mathbb{V}^k$, which serves as a structural prior guiding solution formation. This initialization anchors the training process, focusing learning on the permutation of the initial Hamiltonian cycle $\mathbb{V}^k$, effectively biasing the model. Since different initial cycles encode distinct structural priors, using an ensemble of models with $\mathbb{V}^k$ promotes diversity and improves overall solution quality.

Despite not using supervision or autoregressive decoding, our method achieves competitive results across all TSP sizes. Our Hamiltonian cycle ensemble approach significantly outperforms classical baselines, achieving optimality gaps of 3.52\%, 2.81\%, and 5.03\% on TSP-20, TSP-50, and TSP-100 respectively, compared to greedy NN (all start cities) search's 5.74\%, 11.6\%, and 14.8\%. We also improve upon beam search variants as the problem size grows, with beam search achieving gaps of 3.13--3.92\% on TSP-20, 17.0--17.9\% on TSP-50, and 25.3--26.1\% on TSP-100.

Among learning-based methods, our approach demonstrates competitive performance. We achieve comparable results to Pointer Networks and S2V. Our method also performs competitively with supervised method~\citep{joshi2019efficient}, achieving 5.03\% optimality gap versus 8.38\% on TSP-100. Our performance is comparable to the RL-based approaches. Notably, we are competitive with the GAT model of~\citep{deudon2018learning} even when it is augmented with 2-OPT local search, a strong post-hoc refinement step. While models such as the attention-based approach by~\citep{kool2018attention} leverage RL and autoregressive decoding, our unsupervised, non-autoregressive framework attains similar optimality gaps without requiring either RL training or explicit search procedures. 
However, we do not yet match the RL model such as~\citep{kwon2020pomo}, which benefits from exploiting multiple equivalent solutions through parallel rollouts. 
Our results are also competitive with classical heuristics such as farthest insertion, while offering a fundamentally different approach grounded in structural inductive bias. Overall, our results highlight that non-autoregressive, unsupervised methods can effectively tackle combinatorial optimization problems without sequential decoding. Detailed comparisons are provided in Table~\ref{tab:detailed_comparison}.

In our experiments, we observe that it is not necessary to employ the full set of $\varphi(n)$ 
Hamiltonian cycles for effective ensembling. Instead, using a small subset can already yield 
strong approximations. For example, on TSP-100, selecting the shifts 
$k \in \{1, 9, 81, 87, 91\}$ achieves an average tour length of $8.18$, corresponding to an 
optimality gap of approximately $5.5\%$. Furthermore, the inference cost remains practical: 
whereas a single model requires about $0.40\,$ms per instance, this five-model subset ensemble 
takes only $2\,$ms in total, while still delivering significant improvements in robustness 
and solution quality.

\section{Conclusion}
We present a fully unsupervised, non-autoregressive framework for solving the TSP without relying on explicit search or supervision. By framing the problem as learning permutation matrices that satisfy Hamiltonian cycle constraints via similarity transformations, our approach incorporates structural constraints as inductive biases into the learning process. This formulation enables the model to generate valid tours without sequential decision-making. Our method achieves competitive results and we further demonstrate that ensembles over different Hamiltonian cycles enhance robustness and improve average solution quality, especially on larger problem instances.
These results suggest that learned structural biases provide a promising alternative to traditional heuristic search methods by integrating problem structure as an inductive bias in combinatorial optimization.

\section{Acknowledgement}
This project is partially supported by the Eric and Wendy Schmidt AI
in Science Postdoctoral Fellowship, a Schmidt Futures program; the National Science Foundation
(NSF) and the  National Institute of Food and Agriculture (NIFA); the Air
Force Office of Scientific Research (AFOSR);  the Department of Energy;  and the Toyota Research Institute (TRI).

\section{Discussion and Future Work}
In this paper, we use 24  configurations of $(\tau,\gamma)$ pairs for TSP-20,  6 configurations for TSP-50, and 3 configurations for TSP-100.  This limited yet targeted hyperparameter exploration is sufficient to support our central claim: that \emph{structural inductive bias}, when coupled with a permutation-based formulation, can drive the emergence of high-quality solutions in a fully unsupervised, non-autoregressive setting. We restrict our analysis to minimal hyperparameter settings and adopt the same architecture as UTSP~\citep{min2023unsupervised}, which is sufficiently expressive to illustrate our main claim.  While preliminary evidence indicates that performance can be further improved through extensive hyperparameter tuning or architectural variations (e.g., alternative message passing schemes), such enhancements lie outside the scope of our primary contribution and are left for future work.

\bibliography{utspreference}
\bibliographystyle{plainnat}    

\appendix
\section{Effectiveness of the Hamiltonian Cycle Ensemble}
Figure~\ref{fig:ensemble_boxplots} shows the tour length distributions produced by models trained with different coprime shifts $\mathbb{V}^k$ for TSP instances of size 20, 50, and 100. Each colored boxplot represents the output distribution from a single model trained on a specific cyclic structure, while the green box on the right shows the ensemble result obtained by selecting the shortest tour across all models for each instance. Notably, while individual models exhibit varying performance and often display long-tail distributions with significant outliers, the ensemble output consistently achieves shorter average tour lengths with reduced variance. This demonstrates that the ensemble strategy effectively mitigates the long-tail failure cases seen in individual models by leveraging structural diversity. Consequently, the ensemble approach leads to more robust and consistent solutions across problem instances.

\begin{figure}[ht]
    \centering
    \begin{subfigure}
        \centering
        \includegraphics[width=\textwidth]{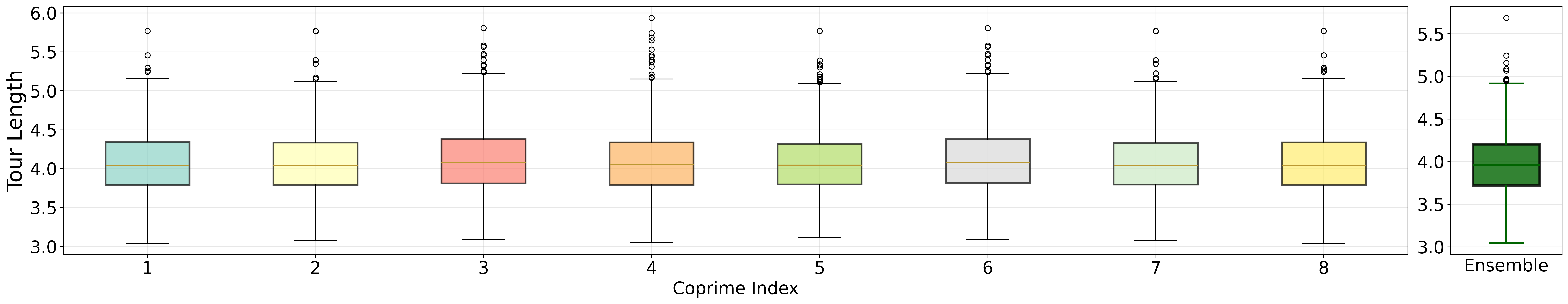}
        \caption*{TSP-20: Ensemble vs. 8 individual coprime models}
    \end{subfigure}
    
    \vspace{0.5em}
    
    \begin{subfigure}
        \centering
        \includegraphics[width=\textwidth]{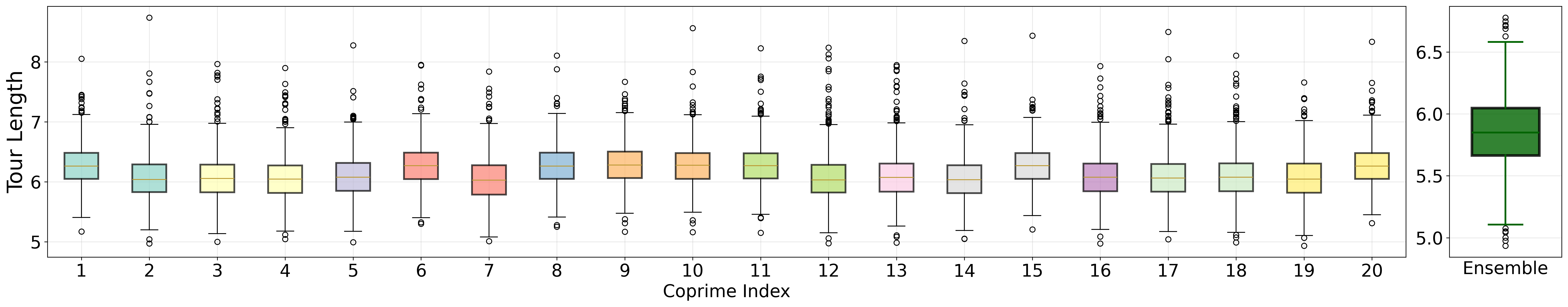}
        \caption*{TSP-50: Ensemble vs. 20 individual coprime models}
    \end{subfigure}
    
    \vspace{0.5em}
    
    \begin{subfigure}
        \centering
        \includegraphics[width=\textwidth]{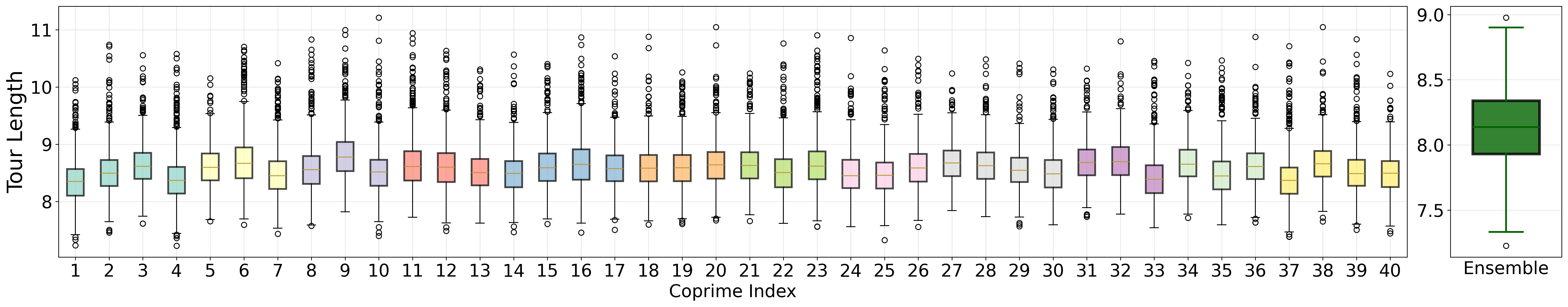}
        \caption*{TSP-100: Ensemble vs. 40 individual coprime models}
    \end{subfigure}
    
    \caption{Tour length distributions across individual models and ensemble output for various TSP sizes. Each index corresponds to a model trained using a different coprime shift matrix $\mathbb{V}^k$, where $\gcd(k,n) = 1$. The ensemble result (rightmost box in green) selects the minimum-length tour across all coprime-specific models for each instance.}
    \label{fig:ensemble_boxplots}
\end{figure}
\section{Quadratic upper bound on the optimality gap}

\begin{theorem}[Quadratic upper bound on the optimality gap]
Let $C(\myP) := \langle \myD,\, \myP \myV \myP^\top\rangle$ for a cost matrix $\myD\in\mathbb{R}^{n\times n}$, 
a cyclic shift matrix $\myV\in\mathbb{R}^{n\times n}$, and a permutation matrix $\myP\in\Pi_n$. 
Let the set of optimal permutations be
\begin{equation}
\mathcal{O} := \arg\min_{\myP\in\Pi_n} C(\myP), \qquad 
C^\star := \min_{\myP\in\Pi_n} C(\myP).
\end{equation}

Given a (soft) doubly-stochastic matrix $\myT$ produced by the model and 
a hard permutation $\widehat \myP$ obtained from $\myT$ at inference, define
\begin{equation}
\delta_* := \min_{\myP\in\mathcal{O}} \|\myT - \myP\|_F, 
\qquad \varepsilon := \|\widehat \myP - \myT\|_F.
\end{equation}
If $\|\myV\|_2 \le 1$ and $\|\myT\|_2 \le 1$, then
\begin{equation}
C(\widehat \myP) - C^\star \ \le\ \|\myD\|_F \big( 2\delta_* + \delta_*^2 + 2\varepsilon + \varepsilon^2 \big).
\end{equation}
\end{theorem}

\begin{proof}
Since $\Pi_n$ is finite, there exists $\myP^\dagger \in \mathcal{O}$ such that 
$\delta_* = \|\myT - \myP^\dagger\|_F$. 
We decompose the gap into a ``soft'' part and a ``rounding'' part:
\begin{equation}
C(\widehat \myP) - C^\star
= \underbrace{\big(C(\myT) - C(\myP^\dagger)\big)}_{\text{soft approximation}}
\ +\ \underbrace{\big(C(\widehat \myP) - C(\myT)\big)}_{\text{rounding}}.
\end{equation}

\paragraph{Soft term.}
Let $E := \myT - \myP^\dagger$. Expanding,
\begin{equation}
\myT \myV \myT^\top - \myP^\dagger \myV \myP^{\dagger\top}
= E \myV \myP^{\dagger\top} + \myP^\dagger \myV E^\top + E \myV E^\top.
\end{equation}
Using the submultiplicative bounds
\begin{equation}
\|AXB\|_F \le \|A\|_F \|X\|_2 \|B\|_2, 
\qquad \|AXB\|_F \le \|A\|_2 \|X\|_F \|B\|_2,
\end{equation}
together with $\|\myP^\dagger\|_2 = 1$, $\|\myV\|_2 \le 1$, and $\|E\|_2 \le \|E\|_F$, we obtain
\begin{equation}
\|E \myV \myP^{\dagger\top}\|_F \le \delta_*, \quad
\|\myP^\dagger \myV E^\top\|_F \le \delta_*, \quad
\|E \myV E^\top\|_F \le \delta_*^2.
\end{equation}
Thus
\begin{equation}
\|\myT \myV \myT^\top - \myP^\dagger \myV \myP^{\dagger\top}\|_F \le 2\delta_* + \delta_*^2.
\end{equation}
By Cauchy--Schwarz,
\begin{equation}
|C(\myT) - C(\myP^\dagger)| 
\ \le\ \|\myD\|_F (2\delta_* + \delta_*^2).
\end{equation}

\paragraph{Rounding term.}
Let $\Delta := \widehat \myP - \myT$, so $\|\Delta\|_F = \varepsilon$. Expanding,
\begin{equation}
\widehat \myP \myV \widehat \myP^\top - \myT \myV \myT^\top
= \Delta \myV \myT^\top + \myT \myV \Delta^\top + \Delta \myV \Delta^\top.
\end{equation}
Using the same bounds and $\|\myT\|_2\le 1$, $\|\myV\|_2 \le 1$,
\begin{equation}
\|\Delta \myV \myT^\top\|_F \le \varepsilon, \quad
\|\myT \myV \Delta^\top\|_F \le \varepsilon, \quad
\|\Delta \myV \Delta^\top\|_F \le \varepsilon^2,
\end{equation}
hence
\begin{equation}
\|\widehat \myP \myV \widehat \myP^\top - \myT \myV \myT^\top\|_F \le 2\varepsilon + \varepsilon^2,
\end{equation}

and by Cauchy--Schwarz,
\begin{equation}
|C(\widehat \myP) - C(\myT)| \ \le\ \|\myD\|_F (2\varepsilon + \varepsilon^2).
\end{equation}

\paragraph{Combine.}
By the triangle inequality,
\begin{equation}
C(\widehat \myP) - C^\star
\ \le\ \|\myD\|_F \big( 2\delta_* + \delta_*^2 + 2\varepsilon + \varepsilon^2 \big).
\end{equation}
\end{proof}

\begin{lemma}[Spectral norm of $\myV$]\label{lem:v}
Let $\myV \in \{0,1\}^{n \times n}$ be the cyclic shift matrix defined in Equation~\ref{eq:v}. Then 
\begin{equation}
\|\myV\|_2 = 1.
\end{equation}
\end{lemma}

\begin{proof}
The matrix $\myV$ is a permutation matrix corresponding to a cyclic shift. 
Permutation matrices are orthogonal, i.e.\ $\myV^\top \myV = I$. 
Hence, all eigenvalues of $V$ have absolute value $1$, and
\begin{equation}
\|\myV\|_2 = \sqrt{\lambda_{\max}(\myV^\top \myV)} 
= \sqrt{\lambda_{\max}(I)} 
= 1.
\end{equation}
Equivalently, $\myV$ is diagonalizable by the discrete Fourier transform, with eigenvalues $\{e^{2\pi i k/n} : k=0,\dots,n-1\}$, all lying on the unit circle. 
Thus the spectral norm of $\myV$ is exactly $1$.
\end{proof}

\begin{lemma}[Spectral norm of $\myT$]\label{lem:ds}
If $\myT$ is doubly-stochastic, then $\|\myT\|_2 \le 1$.
\end{lemma}
\begin{proof}
By the Birkhoff--von Neumann theorem, any doubly-stochastic $\myT$ can be written as a convex combination of permutation matrices: $\myT=\sum_{k} \alpha_k \myP_k$, with $\alpha_k\ge 0$ and $\sum_k \alpha_k=1$.
The spectral norm is convex, hence
\begin{equation}
\|\myT\|_2 = \Big\|\sum_k \alpha_k \myP_k\Big\|_2 \ \le\ \sum_k \alpha_k \|\myP_k\|_2
\ =\ \sum_k \alpha_k \cdot 1 \ =\ 1,
\end{equation}
since each permutation matrix $\myP_k$ is orthogonal and thus has spectral norm $1$.
\end{proof}

\begin{remark}[Interpretation]
$\delta_*$ measures how close the learned soft matrix $\myT$ is to \emph{some} optimal permutation in $\mathcal{O}$,
so the bound handles non-uniqueness naturally. When there are symmetries 
(e.g.\ reversed cycles, relabelings), $\delta_*$ will be the distance to the closest such symmetry, 
which tightens the bound compared to fixing an arbitrary $\myP^\dagger$.
\end{remark}

\section{Randomness by Hardware Perturbation Inference} 
We introduce Hardware Perturbation Inference (HPI), a simple yet effective technique that leverages the inherent non-determinism of low-level numerical operations to generate diverse inference outcomes without modifying the model or introducing explicit stochasticity. Even when using the same GPU architecture (e.g., NVIDIA H100), small numerical discrepancies can arise from differences in fused multiply–add (FMA) kernel execution and TensorFloat-32 (TF32) rounding modes. These subtle perturbations may propagate through the computation, leading to slightly different outputs. HPI exploits this phenomenon to produce multiple candidate solutions for the same problem instance, which can then be ensembled to improve robustness and solution quality—all without requiring changes to the model parameters or training procedure.

In our experiments, we apply HPI on NVIDIA H100 GPUs by toggling the use of FMA operations under TF32 precision. Specifically, we compare inference with TF32+FMA enabled versus disabled, which yields distinct perturbations in the numerical pathways and consequently different solutions for the same input instance $\myV^k$. By combining these outputs in an ensemble, we observe further improvements in solution quality: on the TSP-100 benchmark, the ensemble reduces the optimality gap to 8.10.

While hardware-level perturbations provide a simple mechanism for generating diversity, there are many other ways to introduce randomness to further enhance ensemble performance. We leave a broader discussion of such strategies for future work.
\section{Zero-Shot Generalization}
We propose a zero-shot evaluation strategy inspired by~\citep{min2025unsupervised}, leveraging \emph{dummy nodes}.  
As an example, consider testing on a TSP instance with 95 cities using a model trained on TSP-100.  
To construct such a test case, we randomly select 5 \emph{parent nodes} from the 95 cities and introduce 5 additional dummy nodes, each placed very close to one of the selected parents.  
This augmentation produces an effective 100-node instance, which we then solve using the TSP-100 model.  

If the resulting tour connects each dummy node directly to its parent node, we merge them to recover a valid tour for the original 95-city problem.  
If this condition is not met, we repeat the process by re-sampling the parent and dummy nodes.  

Using this strategy, our model achieves a mean tour length of $8.24$ across $1{,}000$ unseen test instances, compared to $9.49$ for the greedy baseline.  
For reference, the optimal mean tour length is $7.57$.  
These results demonstrate that our dummy-node construction enables effective zero-shot transfer across problem sizes while maintaining competitive performance.

\end{document}